\newtheorem{theorem}{Theorem}
\newtheorem{lemma}[theorem]{Lemma}
\newtheorem{definition}[theorem]{Definition}
\newenvironment{proof}{{\bf Proof.}}{$\Box$}
\newcommand{\N}{\mbox{$\mathbb{N}$}}
\title{A Note on Community Trees in Networks}
\author{
  Ruqian Chen\thanks{Department of Mathematics}\\
  University of Washington\\
  Seattle, WA 98105 \\
  \texttt{ruqian@uw.edu} \\
  \And
  Yen-Chi Chen\thanks{Department of Statistics} \\
  University of Washington\\
  Seattle, WA 98105 \\
   \texttt{yenchic@uw.edu} \\
  \AND
  Wei Guo\thanks{Department of Industrial \& Systems Engineering}\\
  University of Washington\\
  Seattle, WA 98195 \\
   \texttt{weig@uw.edu} \\
   \And
  Ashis G. Banerjee\thanks{Department of Industrial \& Systems Engineering and Department of Mechanical Engineering}\\
  University of Washington\\
  Seattle, WA 98195 \\
  \texttt{ashisb@uw.edu} \\
}
\begin{document}

\maketitle

\begin{abstract}
We introduce the concept of community trees that summarizes topological structures within a network. 
A community tree is a tree structure representing clique communities from the clique percolation method (CPM). 
The community tree also generates a persistent diagram.
Community trees and persistent diagrams reveal topological structures of the underlying networks and can be used as visualization tools. 
We study the stability of community trees and derive a quantity called the total star number (TSN) that presents an upper bound on the change of community trees. 
Our findings provide a topological interpretation for the stability of communities generated by the CPM.
\end{abstract}

\section{Introduction}

The clique percolation method (CPM \cite{understanding_Palla_2005}) is
a well-known and powerful algorithm for community detection \cite{porter2009communities,fortunato2010community}
in networks
that has been applied to various complex networks such as
social networks \cite{understanding_Palla_2005, palla2007quantifying}, biological networks \cite{adamcsek2006cfinder,zhang2006identification},
and collaboration networks \citep{pollner2005preferential,palla2007quantifying,duan2012incremental}.
Given an order of cliques, say $k$, the CPM detects communities using overlaps between the $k$-cliques.
Since the CPM uses cliques to construct the communities,
one vertex may be assigned to multiple communities, and this fact can be used to describe the overlaps among the communities \cite{understanding_Palla_2005,porter2009communities}. 
The communities found by the CPM are, thus, called clique communities. 

In this paper, we introduce the concept of a community tree of a network. 
Here, we consider the simplest case where a network is an undirected and unweighted graph. 
A community tree is a tree structure representing the clique communities discovered by the CPM. 
A key characteristic of a community tree is:
instead of using a fixed order of the cliques $k$, 
the construction of a community tree is based on the clique communities for all possible value of $k$. 
It uses the fact that the collection of all possible clique communities (regardless of the order of the cliques)
forms a tree structure. 


A community tree is generalized from the notion of the cluster tree 
\cite{stuetzle2003estimating,chaudhuri2010rates,balakrishnan2013cluster,chaudhuri2014consistent,chen2016generalized,jisu2016statistical} 
of a function
in topological data analysis (TDA \cite{wasserman2016topological}).
Similar to the fact that a cluster tree summarizes the creation and elimination of connected components
of a function, a community tree summarizes the creation and elimination of communities
of a network. 
A community tree naturally generates 
a persistent diagram \cite{cohen2007stability}, a popular analytical tool in TDA.
Using a community tree or a persistent diagram,
we are able to analyze topological structures
of a network using the notion of communities. 
Thus, this paper provides a new direction to
bridge TDA and network science.


\section{Background}

\subsection{Clique community and clique percolation method}

Let $G$ be an unweighted, undirected graph (network) with a vertex set $V(G)$ and an edge set $E(G)$. 
A \textit{clique} refers to a graph $C \subseteq G$ such that any two distinct vertices $u,v \in V(C)$ are adjacent (i.e. share an edge). 
A $N$-clique is a clique with $N$ vertices. 
Assume there exists $C_1,\cdots,C_n$, $n$ different $k$-cliques within $G$. 
Given an integer level $k$ where $k\geq 2$,
the CPM \cite{understanding_Palla_2005} is an algorithm that 
finds every $k$-clique within the network $G$ (i.e., $C_1,\cdots,C_n$)
and partitions these $k$-cliques based on an adjacency matrix $A\in\{0,1\}^{n\times n}$
such that $A_{ij} = 1$ if $C_i\cap C_j$ contains $k-1$ vertices and $0$ otherwise. 
Namely, CPM separates $k$-cliques into connected components using the adjacency matrix. 
The subgraph generated by the union of $k$-cliques within the same connected component 
is called a {\bf$k$-clique community} ({\bf$k$-community} for short) \cite{understanding_Palla_2005}. We define a $1$-community to be any graph. 

If a $k$-community $\mathcal{C} = \cup_{\ell=1}^L C_\ell$ is created by $C_1,\cdots,C_L$, 
$L$ $k$-cliques, 
then for any two $k$-cliques $C_i$ and $C_j$,
there exists a sequence of $k$-cliques $C_{\ell_1}=C_i,C_{\ell_2},\cdots, C_{\ell_m} = C_j$
such that 
$C_{\ell_a}\cap C_{\ell_{a+1}} $ is a $(k-1)$-clique. We call such a sequence a \textbf{$k$-clique path} 
$P$.
If a community is formed by $k$-cliques, we say that this community has an {\bf order} $k$.




\subsection{Community trees}

An important feature of a $k$-community is its nested structure. 
Since any $k$-clique contains $k$ distinct $(k-1)$-cliques,
any $k$-community is a subset (subgraph) of a $(k-1)$-community. 
Thus, 
for any $k$-community $\mathcal{C}_k$, 
there exists a sequences of communities (at different orders)
$\mathcal{C}_{k-1},\mathcal{C}_{k-2},\cdots,\mathcal{C}_{1}$
such that each $\mathcal{C}_{\omega}$ is a $\omega$-community and
$$
\mathcal{C}_k\subset \mathcal{C}_{k-1}\subset\cdots\subset\mathcal{C}_1.
$$
Note that the $1$-community is the original graph $G$.

The nested structure of clique communities defines a tree structure for the collection of
all the communities across various orders. 
Given any integer $k\in\N$, let 
$\mathcal{C}_{k,1},\cdots,\mathcal{C}_{k,J(k)}$ be the $k$-communities of $G$
and 
$$
\mathbb{C}_k = \{\mathcal{C}_{k,j}:j=1,\cdots,J(k)\}
$$
be the collection of them. 
Then the collection of all communities
$$
\mathbb{C} = \bigcup_{k\in\N} \mathbb{C}_k
$$
has a tree structure. 
We call the tree generated by $\mathbb{C}$ the {\bf community tree} of $G$.
Figure~\ref{fig::tr1} displays an example of a community tree and 
its corresponding communities. 

\begin{figure}
\center
\includegraphics[height=4in]{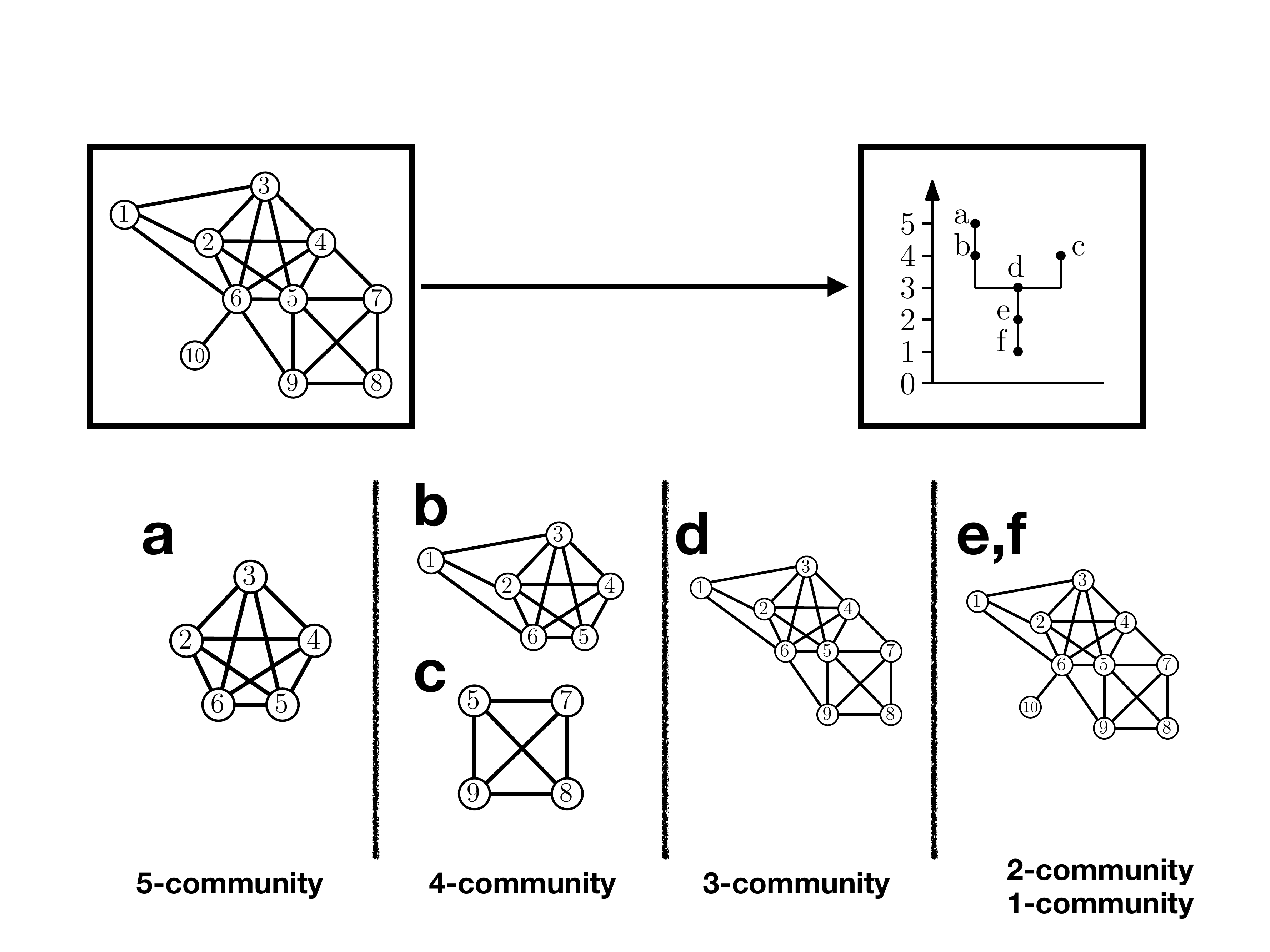}
\caption{The construction of a community tree from a network.
The top left panel shows the original network (graph). 
The top right panel shows the resulting community tree.
To construct the community tree, we consider $k$-community at various levels.
The panels in the bottom row present
all the communities at various orders.
The alphabetical letter denotes the corresponding position of each community in the cluster tree (top right).
}
\label{fig::tr1}
\end{figure}

Every node of a community tree
corresponds to a unique community
so a community tree informs how different communities are associated to each other. 
Starting at a leaf of a community tree, 
when we reduce the order of communities (moving down the tree toward the root), we see
how one community morphs into another community.
Moreover, communities from different branches of the tree may merge when we follow them down to the root.
The order that two communities merge tell us how these two communities overlap. 
Using a community tree, we can visualize a complex network and its communities.

\section{Stability of community trees}

\subsection{Persistent diagram and tree metrics}

To summarize the shape of a community tree,
we introduce the concepts of {\bf components} and {\bf persistent diagram (PD)} \cite{cohen2007stability}. 
A component of a community tree is a nested sequence of
communities that starts with
a community at a leaf of the tree, say a $T$-community $\mathcal{C}_{T}$,
and then followed by $\mathcal{C}_{T-1},\mathcal{C}_{T-2},\cdots,\mathcal{C}_{1}$
such that
$$
\mathcal{C}_T\subset \mathcal{C}_{T-1}\subset\cdots\subset \mathcal{C}_1,
$$
where $\mathcal{C}_{\omega}$ is a $\omega$-community. 
Every leaf in a community tree corresponds to
a unique component and the entire tree can be reconstructed using all these components. 
Figure~\ref{fig::tr2} shows the components in the tree of Figure~\ref{fig::tr1}.

For each component, we define its {\bf birth time} (birth order or birth level) to be the highest order of its communities (this corresponds to the order of its leaf community).
Two components will merge at certain order/level.
Whenever two components merge, we compare their birth time.
The one with a higher birth time will stay alive
and the other one will be eliminated. 
And we define the order of this merging to be
the {\bf death time} of the younger component. 
Note that if two components are at the same age, we arbitrarily pick one of them to stay alive. 
Moreover, for the component that cannot be assigned a death time using this way, we will set its death time to be $1$. 

The difference between birth time and death time of a component is called the {\bf persistence or life time}.
A component with a higher persistence is often related to the communities that are more robust against change in the network.
Assume that a community tree has $M$ components
and each have birth and death time $(d_1,b_1),\cdots,(d_M,b_M)$. 
A PD of a community tree is the collection of
birth time and death time for each component along with the line $x=y$. 
Namely, 
$$
PD = \{(d_i,b_i): i=1,\cdots,M\}\cup \{(d,b): d=b\}.
$$

\begin{figure}
\center
\includegraphics[height=2.4in]{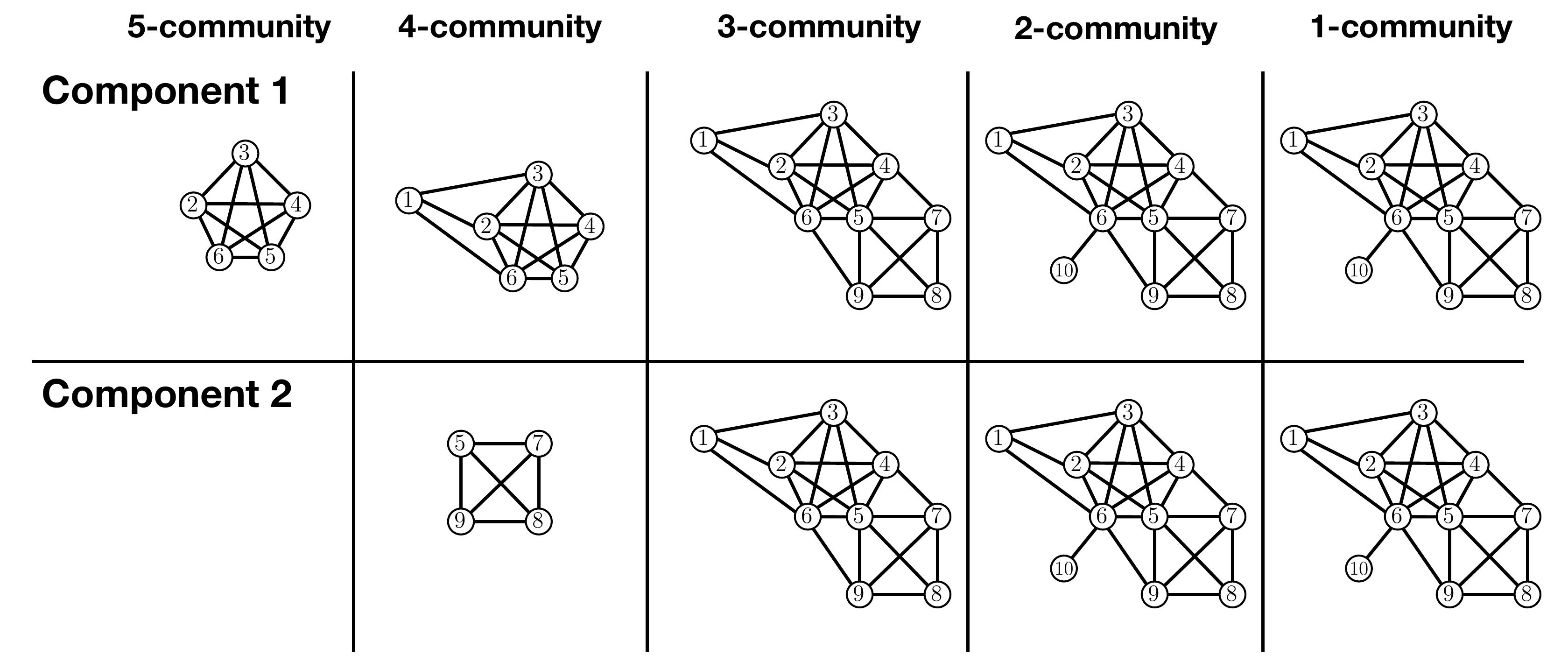}
\caption{Components of the community tree in Figure~\ref{fig::tr1}.
There are two components in the community tree since the tree has two leafs. 
The component 1 starts with the $5$-community (the label $a$ community in Figure \ref{fig::tr1}).
The component 2 is the one starts with the $4$-community (the label $c$ community in Figure \ref{fig::tr1}).
We see that the two components merge at the order of $3$, i.e., their $3$-communities (and any community of a smaller order) are the same.
Since component 2 has birth time 4, which is lower than component 1 (birth time 5), it is eliminated due to this merging and its
death time is $3$ (the order of this merging).
Note that component 1 never merged into others so its death time is $1$.}
\label{fig::tr2}
\end{figure}

\begin{figure}
\center
\includegraphics[height=1.8in]{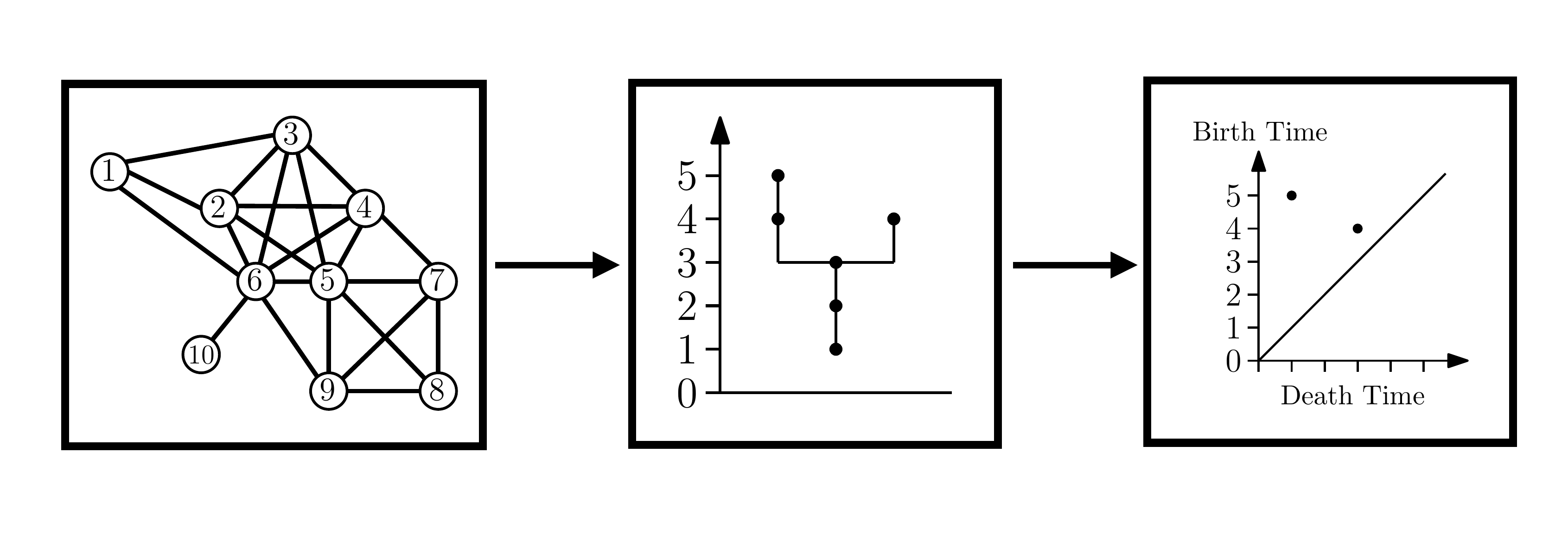}
\caption{Original network, community tree,
and the persistence diagram correspond to Figure~\ref{fig::tr1}. 
By examining the components described in Figure~\ref{fig::tr2}, we see that there are two components. Component 1 has birth time $b_1=5$ and death time $d_1=1$
and component 2 has a birth time of $b_2=4$ and death time $d_2=3$ (the order of merging). 
The persistent diagram (right panel) is then the two points $(1,5), (3,4)$ and the line $x=y$.}
\label{fig::tr3}
\end{figure}

The persistent diagram is a 2D diagram representing topological structures of a community tree. 
Every community tree admits a unique persistent diagram. 
The elements in the persistent diagram represents the robustness of each component in the community tree. 
Figure~\ref{fig::tr3} shows the construction of a persistent diagram using the network presented
in Figure~\ref{fig::tr1}.

Persistent diagrams provide a way to measure the change in community trees. 
We define the changes of community tree in terms of the changes of the corresponding persistent diagrams. 
In particular, we use the bottleneck distance \cite{cohen2007stability} of persistent diagrams to measure the change in community trees. 
For
any two persistent diagrams $PD_1$ and $PD_2$,
let $\gamma:PD_1\mapsto PD_2$ be a bijective (one-to-one and onto) mapping between them. 
The {\bf bottleneck distance} is
$$
d_\infty(PD_1,PD_2) = \inf_\gamma \sup_{A\in PD_1} \|A-\gamma(A)\|_\infty,
$$
where the infimum is taken over all possible bijective mappings. And for a vector $v=(v_1,v_2)$, the norm $\|v\|_\infty = \max\{|v_1|,|v_2|\}$ is the $L_\infty$ norm.

The bottleneck distance has an important relation with  components: if a component has persistence $L$, we need a change with at least a bottleneck distance of $L$ to eliminate this component.

Let $T_1$ and $T_2$ denote the two community trees and $PD(T_1), PD(T_2)$ be the corresponding persistent diagrams.
We then define the bottleneck distance between the two community trees as
$$
d_B(T_1,T_2) = d_\infty(PD(T_1),PD(T_2)).
$$
The bottleneck distance measures how the community trees differ in terms of their corresponding persistent diagrams.
Figure~\ref{fig::tr4} provides examples
of computing bottleneck distance of community trees.

Note that there are other metrics for trees constructed from a function 
\cite{balakrishnan2013cluster,chaudhuri2010rates,chaudhuri2014consistent,eldridge2015beyond,jisu2016statistical}. 
However, these metrics cannot be applied to community trees
because community trees are not constructed from a function.

\begin{figure}
\center
\includegraphics[height=3.5in]{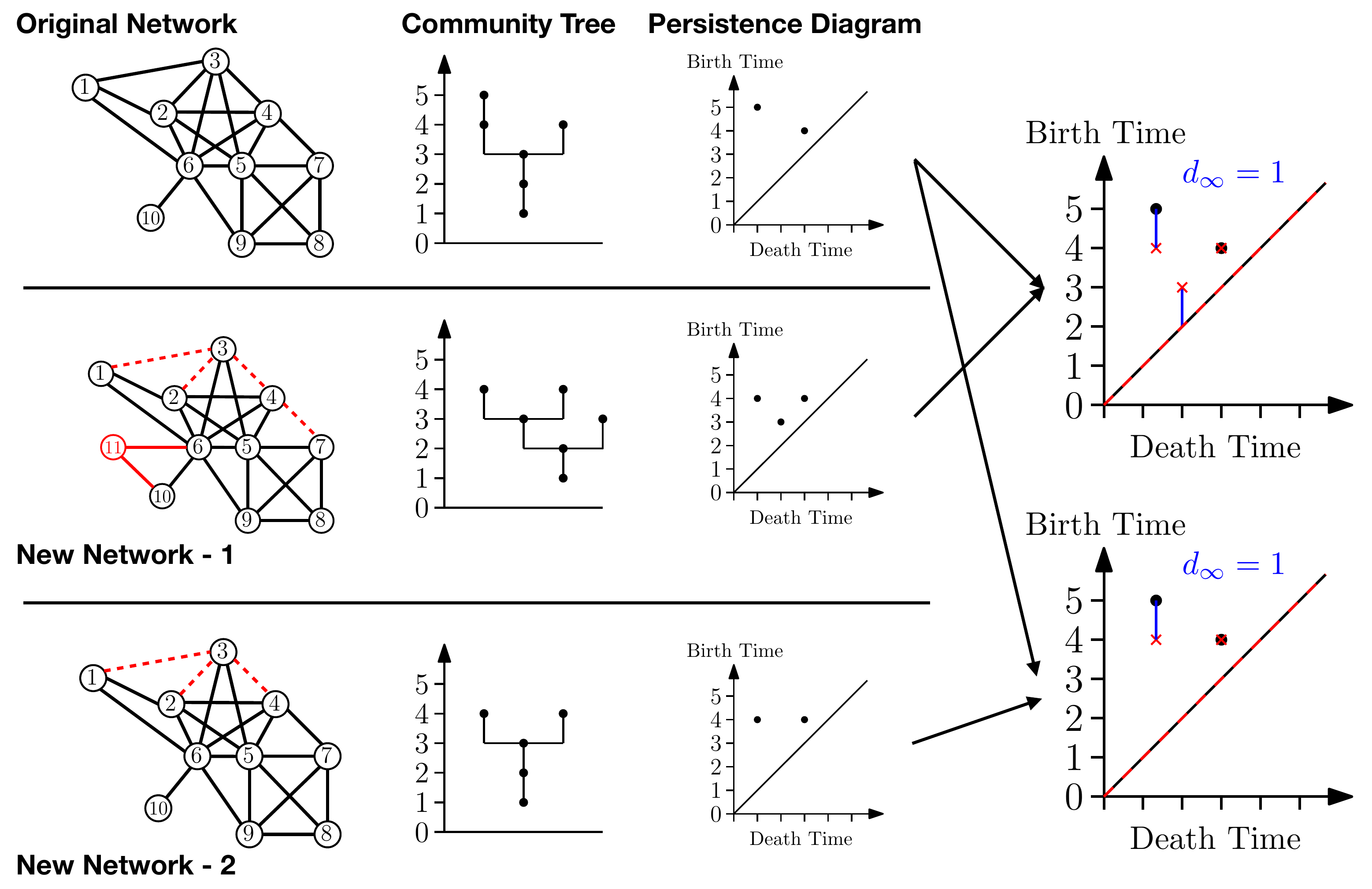}
\caption{The difference between two networks.
We compare one network (top row) to
two possible new networks (middle and bottom rows).
The first column displays how the networks look like.
For the new networks (middle and bottom),
the dashed red edges are edges being removed and the
solid red edges/vertex are the new edges/vertex being added. 
The second column presents the community trees.
The third column shows the persistent diagrams. 
The right panel compares persistent diagrams (black: original network; red: new network). 
The blue line indicates an optimal matching between two persistent diagrams that leads to the bottleneck distance  between them.
When comparing the first new network (new network -1) to the original network, 
the $RSN$ concerns the dashed edges (edges being removed), so it equals $2$ (the set $V_0$ can be chosen as $V_0 =\{v_3,v_4\}$). 
The $ASN$ is $1$ since the all the added edges are incident to vertex $v_{11}$.
Thus, $TSN = 2+1 = 3$ gives a conservative bound. 
In the case of comparing the original network to the second new network (new network - 2), the $RSN$ is 1 and $ASN$ is 0, leading to $TSN=1$, which agrees with the actual bottleneck distance.
}
\label{fig::tr4}
\end{figure}

\subsection{Stability theory}

%
%
For any edge $e$, let $\nu(e)$ be the two
vertices of $e$.
For two graphs $G$ and $G'$,
we introduce a quantity called star number 
that will be useful in deriving the stability theory. 

\begin{definition} 
The \textbf{removal star number (RSN)} of $G'$ and $G$
is
$$
RSN(G',G) = \min\{|V_0|: \nu(e)\cap V_0\neq\emptyset\,\,\, \forall e\in E(G)\backslash E(G')\},
$$
where $V_0$ is a collection of vertices
and $|V_0|$ is the number of elements in the set $V_0$.
The \textbf{addition star number (ASN)} of $G'$ and $G$
is
$$
ASN(G',G) = \min\{|V_0|: \nu(e)\cap V_0\neq\emptyset\,\,\, \forall e\in E(G')\backslash E(G)\}
$$
The \textbf{total star number (TSN)} is the sum of RSN and ASN.
\label{def::sn}
\end{definition}
If $TSN(G,G')=k$,
then we can interpret it as that 
the change from $G$ to $G'$
can be attributed to about $k$ vertices. 

\begin{theorem}
Let $G$ and $G'$ be two graphs.
Then their community trees satisfy
$$
d_B(T(G),T(G')) \leq TSN(G',G). 
$$
\label{thm::star}
\end{theorem}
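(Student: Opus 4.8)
The plan is to reduce the global perturbation to a sequence of single-vertex edits and to bound the effect of each such edit by one unit of bottleneck distance, so that the triangle inequality for $d_B$ assembles the pieces into $RSN+ASN$. Write $H$ for the graph on $V(G)\cup V(G')$ whose edge set is $E(G)\cap E(G')$, so that passing from $G$ to $H$ only deletes edges (those in $E(G)\setminus E(G')$) and passing from $H$ to $G'$ only adds edges (those in $E(G')\setminus E(G)$). By the triangle inequality,
$$
d_B(T(G),T(G')) \le d_B(T(G),T(H)) + d_B(T(H),T(G')),
$$
so it suffices to bound the first term by $RSN(G',G)$ and the second by $ASN(G',G)$.

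For the deletion term, let $V_0=\{u_1,\dots,u_r\}$ with $r=RSN(G',G)$ be a minimum set of vertices meeting every deleted edge. I would build a chain $G=G_0\supseteq G_1\supseteq\cdots\supseteq G_r=H$, where $G_i$ is obtained from $G_{i-1}$ by deleting all not-yet-deleted edges incident to $u_i$. Since every deleted edge touches some $u_i$, after $r$ steps all of $E(G)\setminus E(G')$ is gone and $G_r=H$. Each step changes only edges incident to a single vertex, so the whole argument rests on the following elementary claim: if $G'$ differs from $G$ only in edges incident to one vertex $v$, then $d_B(T(G),T(G'))\le 1$. The addition term is handled symmetrically using a minimum vertex cover $\{w_1,\dots,w_a\}$ with $a=ASN(G',G)$ of the added edges and adding them one incident vertex at a time. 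Granting the single-vertex claim, the triangle inequality yields $d_B(T(G),T(H))\le r$ and $d_B(T(H),T(G'))\le a$, hence the theorem.

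It remains to prove the single-vertex claim, which is the heart of the matter. I would exploit that editing only $v$-incident edges is \emph{monotone} on the clique structure: a clique is affected only if it contains $v$, and if $C\ni v$ is a $k$-clique of $G$ then $C\setminus\{v\}$ is a $(k-1)$-clique present in both $G$ and $G'$. For deletion this has two consequences. First, no new cliques and hence no new communities of any order are created, so every birth and death order can only move down; and the order of any surviving community drops by at most one, since a destroyed $k$-clique through $v$ still contributes the surviving $(k-1)$-clique $C\setminus\{v\}$. Second, the only subtle effect is connectivity: deleting $v$-edges can split a $k$-community of $G$ into two $k$-communities of $G'$. The key lemma to establish is that any two such pieces remain joined inside a common $(k-1)$-community of $G'$, because a $k$-clique path realizing their connection in $G$ descends, after dropping $v$ from the destroyed cliques, to a $(k-1)$-clique path in $G'$. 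Granting this, every component created by a split merges again at most one order below its birth, so it has persistence at most one and can be matched to the diagonal of the persistent diagram at cost at most one; the surviving components are put in correspondence with their $G$-counterparts, whose birth and death orders have each moved by at most one. Such a matching has bottleneck cost at most $1$, proving the claim, with the addition case being the time-reversal of deletion.

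The main obstacle I anticipate is precisely this connectivity lemma: verifying that a $k$-clique path in $G$ whose cliques may be destroyed by the deletion projects to a genuine $(k-1)$-clique path in $G'$, with the required consecutive $(k-2)$-overlaps, so that split pieces truly reunite one order lower. Once this ``heal one level down'' statement and the companion ``orders shift by at most one'' statement are secured, the persistent-diagram matching and the per-edit bottleneck bound of $1$ follow, and the decomposition above delivers the full bound $TSN(G',G)$.
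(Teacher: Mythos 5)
Your proposal is correct in outline and shares the paper's overall skeleton---split the perturbation through the common-edge graph via the triangle inequality, walk a chain of single-vertex modifications, and bound each step by $1$---but it differs from the paper's proof in one substantive and interesting way. The paper's chain removes the cover vertices of $V_{RSN}$ \emph{entirely} (invoking Lemma~\ref{lem::one} on vertex deletion), which overshoots the intermediate graph $G_1$: the chain ends at $G_2 = G\backslash V_{RSN} \subset G_1$, and the paper must then bridge the gap with the monotonicity claim \eqref{eq::G12}, namely that $G_2\subset G_1\subset G$ implies $d_B(T(G),T(G_1))\leq d_B(T(G),T(G_2))$, which is asserted without proof and is not an obvious property of the bottleneck distance. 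Your chain instead deletes, at each cover vertex $u_i$, only the not-yet-removed edges of $E(G)\setminus E(G')$ incident to $u_i$, so it lands exactly on the intermediate graph $H$ and needs no monotonicity step at all. The price is that your per-step lemma (editing an arbitrary subset of edges incident to one vertex changes $d_B$ by at most $1$) is formally stronger than the paper's Lemma~\ref{lem::one} (full vertex removal, which is the special case of deleting all incident edges); the gain is that your version both removes the weakest link in the paper's argument and, unlike the paper---which only ``recalls'' Lemma~\ref{lem::one} without proof---you sketch why the per-step bound holds: destroyed $k$-cliques through $v$ descend to surviving $(k-1)$-cliques $C\setminus\{v\}$, so births and merge orders drop by at most one, and your ``heal one level down'' connectivity lemma (a $k$-clique path projecting to a $(k-1)$-clique path with $(k-2)$-overlaps) guarantees split-off components die within one order of their birth, whence an explicit matching of cost at most $1$. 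That connectivity lemma is indeed the crux and your sketch of it is sound (edges of $C\setminus\{v\}$ never touch $v$, so consecutive intersections retain at least $k-2$ untouched vertices); filling in its details would put your argument on firmer ground than the paper's own Part~I, so this is a genuine, and arguably tighter, variant of the published proof rather than a gap.
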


The proof of Theorem~\ref{thm::star} is long so we defer it to the appendix. 
Theorem~\ref{thm::star} provides a powerful result: 
the difference between two community trees is bounded by
their $TSN$. 
This implies that it is possible that
the community tree does not change much
even the network has been substantially changed. 
For instance, if a network has a vertex of high degree, 
removing all edges attached to this vertex 
only contributes to a $TSN = 1$ effect on the community tree.
So the community tree may remain unchanged or only slightly changed. 
In a sense, the $TSN$ describes an upper bound to the effective change to the community tree. 
Figure~\ref{fig::tr4} provides the $TSN$ for comparing different networks. 
We also include the actual bottleneck distance as a reference.


$TSN$ can be computed without constructing community trees and persistent diagrams. 
Thus, to roughly compare the community tree difference between two networks, 
we do not need to actually build the community trees and persistent diagrams but just compute their $TSN$.

However, computing the $TSN$ could be very difficult as described in the following theorem.

\begin{theorem}
Let $G$ and $G'$ be two graphs.
Computing the $RSN(G',G)$ or $ASN(G',G)$
is an NP-complete problem.
\label{thm::NP}
\end{theorem}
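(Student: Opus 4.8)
The plan is to observe that both $RSN$ and $ASN$ are instances of the minimum vertex cover problem in disguise, and then to invoke the classical NP-completeness of vertex cover. Writing $D = (V(G)\cup V(G'),\, E(G)\backslash E(G'))$ for the graph of ``removed'' edges, the constraint $\nu(e)\cap V_0\neq\emptyset$ for all $e\in E(G)\backslash E(G')$ says exactly that $V_0$ is a vertex cover of $D$. Hence $RSN(G',G)$ equals the size of a minimum vertex cover of $D$, and symmetrically $ASN(G',G)$ equals the size of a minimum vertex cover of the graph of added edges $(V(G)\cup V(G'),\, E(G')\backslash E(G))$. Since NP-completeness is a property of decision problems, I would phrase the statement in terms of the associated decision problem: given $G$, $G'$, and an integer $k$, decide whether $RSN(G',G)\leq k$ (and analogously for $ASN$).

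First I would verify membership in NP. A certificate is a candidate vertex set $V_0$ with $|V_0|\leq k$; one checks in polynomial time that every edge of $E(G)\backslash E(G')$ has an endpoint in $V_0$, so the decision problem lies in NP. The same check applies to $ASN$ with the symmetric difference $E(G')\backslash E(G)$.

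Next I would establish NP-hardness by a reduction from the vertex cover problem, one of Karp's original NP-complete problems. Given an instance $(H,k)$ of vertex cover, I construct $G$ and $G'$ on the common vertex set $V(H)$ by setting $E(G) = E(H)$ and $E(G') = \emptyset$. Then $E(G)\backslash E(G') = E(H)$, so $RSN(G',G)$ is precisely the size of a minimum vertex cover of $H$; consequently $RSN(G',G)\leq k$ if and only if $H$ has a vertex cover of size at most $k$. The construction is clearly polynomial. An identical argument with the roles of $G$ and $G'$ swapped (take $E(G')=E(H)$ and $E(G)=\emptyset$) reduces vertex cover to the decision version of $ASN$, giving its NP-hardness as well.

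The content of the argument lies almost entirely in the first identification; once $RSN$ and $ASN$ are recognized as vertex cover, membership and hardness are routine. There is no deep obstacle, but the one point requiring care is the vertex set convention: because the definition quantifies only over edges in the symmetric difference, isolated vertices and vertices appearing in only one of the two graphs are irrelevant to the optimization, so I would make sure the reduction places all of $V(H)$ in the common vertex set and adds no spurious edges, ensuring the difference graph is exactly $H$.
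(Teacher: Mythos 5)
Your proposal is correct and takes essentially the same route as the paper: both identify $RSN(G',G)$ with the minimum vertex cover of the difference graph $\bigl(V, E(G)\backslash E(G')\bigr)$ and handle $ASN$ by symmetry. In fact your write-up is the more complete version of the argument --- the paper only states the equivalence with vertex cover and cites its NP-completeness, whereas you properly phrase the decision problem, check NP membership, and give the explicit hardness reduction (take $E(G)=E(H)$, $E(G')=\emptyset$ so the difference graph is exactly $H$), which is the direction the paper leaves implicit.
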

\begin{proof}
We only prove the case of $RSN$ since the case of $ASN$ is just swap the role of $G$ and $G'$. 
Let $V(G)$ denotes the vertex in $G$ and $E^\Delta = E(G)\backslash E(G')$. 
We define a new graph $G^\Delta = (V(G),E^\Delta)$. 
This graph is the graph where the edges representing 
the edge difference between $G$ and $G'$. 

Thus, the number $RSN(G',G)$ is to find
the minimum number of vertices in $V(G)$
such that every edge in $E^\Delta = E(G)\backslash E(G')$
is incident to at least one element in the subset of vertices. 
Namely, $RSN(G',G)$ is equivalent to the size of minimum vertex cover of the graph $G^\Delta$.
Since finding the minimum vertex cover is an NP-complete problem \cite{yannakakis1980edge,dinur2005hardness}, 
computing $RSN(G',G)$ is also NP-complete. 

\end{proof}

\section{Examples}

\begin{figure}
\centering
\begin{subfigure}[b]{0.3\textwidth}
        \centering
        \includegraphics[height=1.3in]{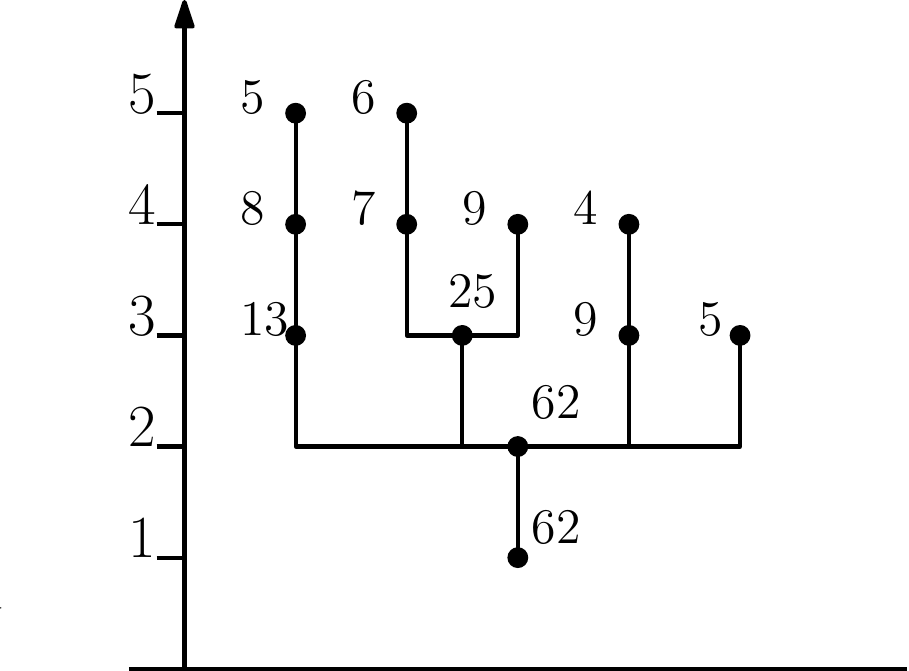}
		\includegraphics[height=1.3in]{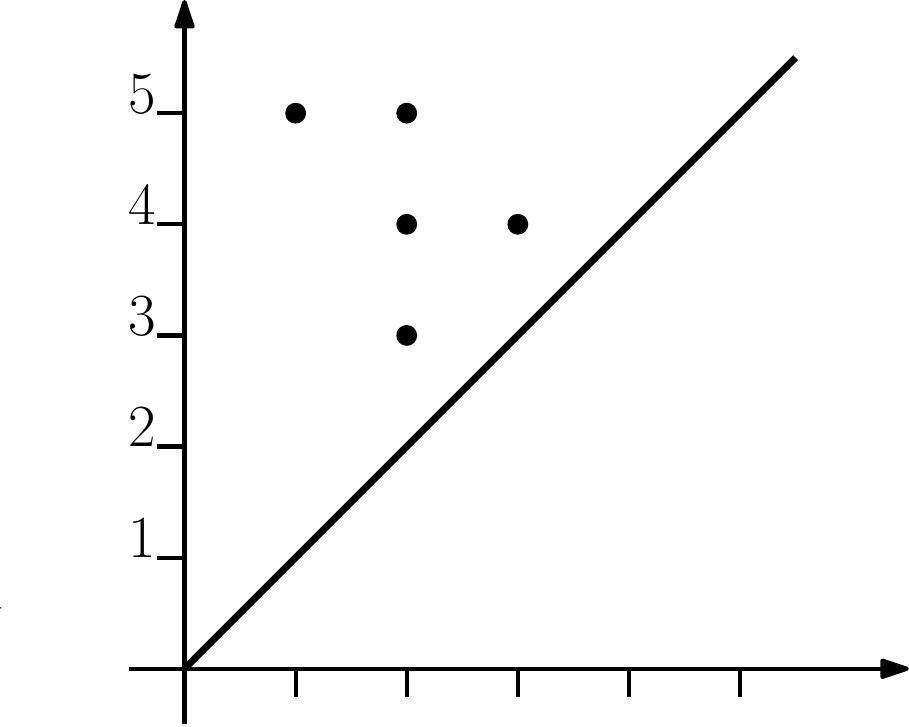}
        \caption{Dolphin network}
    \end{subfigure}
\begin{subfigure}[b]{0.3\textwidth}
        \centering
        \includegraphics[height=1.3in]{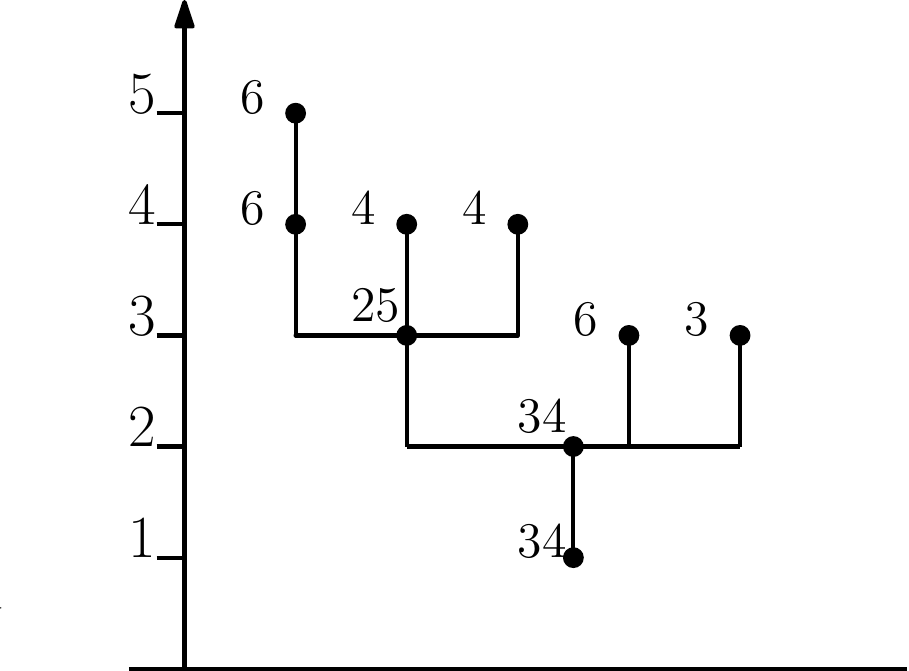}
		\includegraphics[height=1.3in]{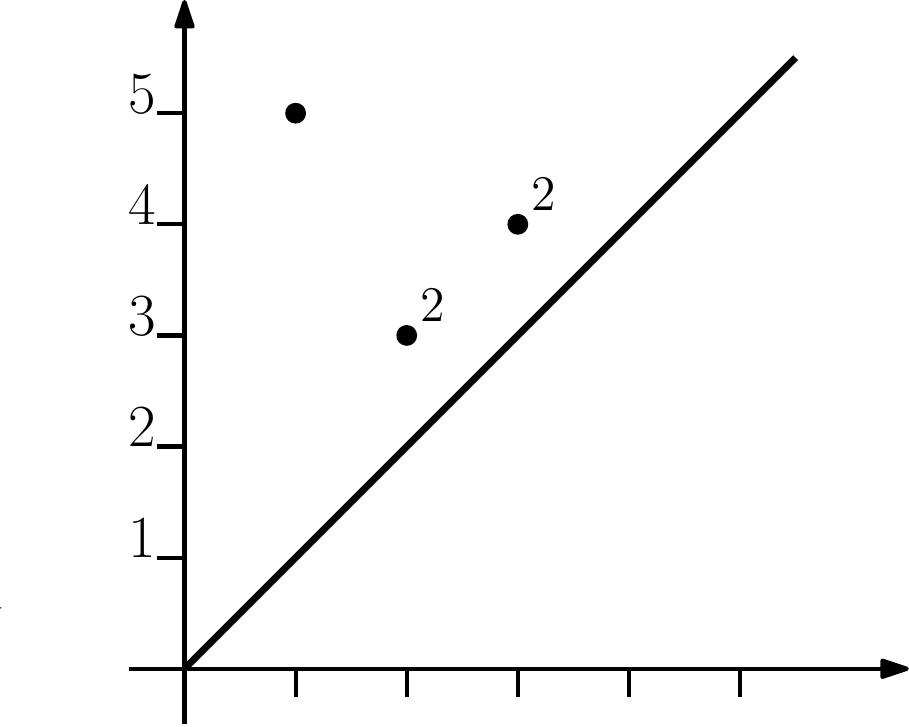}
        \caption{Zachary's karate club}
    \end{subfigure}
\begin{subfigure}[b]{0.3\textwidth}
        \centering
        \includegraphics[height=1.3in]{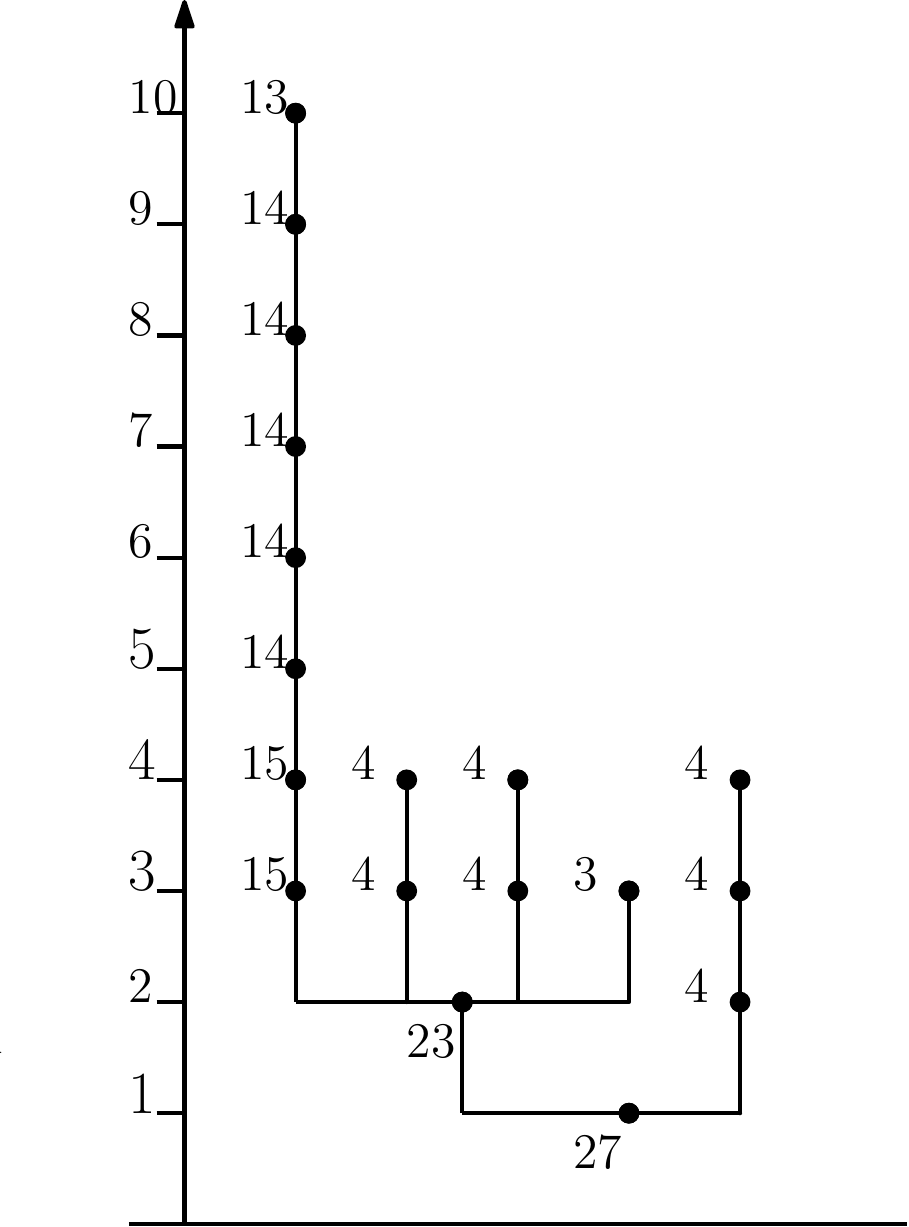}\\
		\includegraphics[height=1.3in]{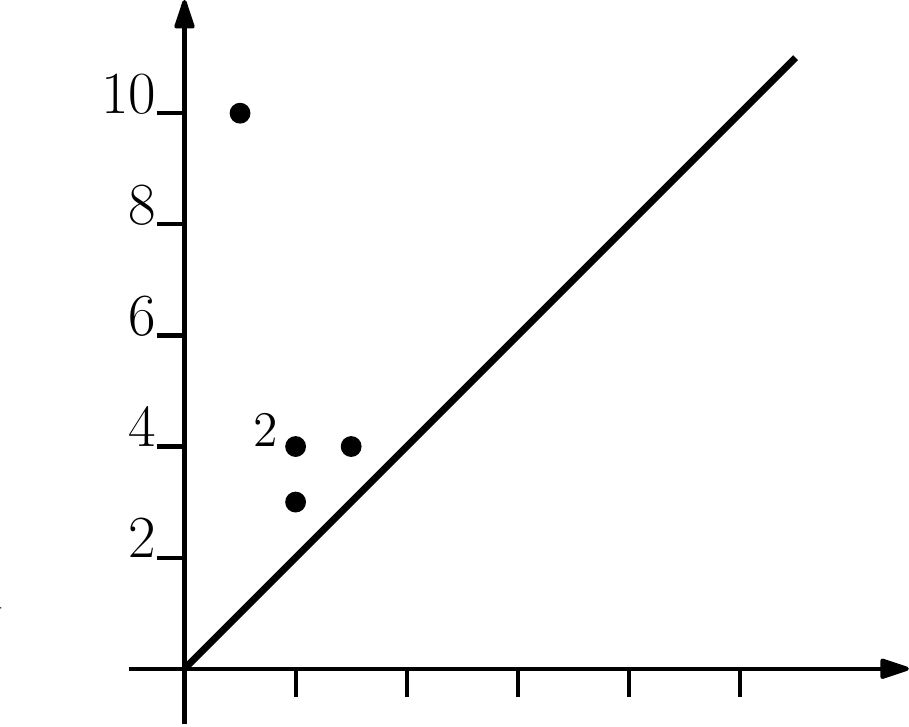}
        \caption{Zebra network}
    \end{subfigure}
\caption{Examples of community trees and persistent diagrams in real networks.
We consider three famous networks: the dolphin network, Zachary's karate club network, and Zebra network.
The top row shows their community trees
and the bottom row shows their persistent diagrams. 
The number attached to each node of community trees indicates the number of vertices belong to that node (community). 
The number attached to each point in the persistent diagram indicates number of components with the corresponding birth and death time. 
}
\label{fig::ex01}
\end{figure}

We apply community trees to real network datasets in Figure~\ref{fig::ex01}. 
The left panel displays the community tree and persistent diagram of a dolphin social network\footnote{\url{http://www-personal.umich.edu/~mejn/netdata/}}.
This dataset describes the social network of 62 dolphins (with 159 edges) in a community of Doubtful Sound, New Zealand \citep{burnham2011aic}.
The middle panel shows the result of the Zachary's karate club network\footnote{\url{http://vlado.fmf.uni-lj.si/pub/networks/data/ucinet/ucidata.htm\#zachary}} \citep{zachary1977information}. 
This network contains 34 vertices and 78 edges. 
The right panel presents the community tree and persistent diagram of a zebra network\footnote{\url{http://moreno.ss.uci.edu/data.html\#zebra}} \citep{sundaresan2007network} that involves 27 zebras (27 vertices) and 111 interactions (111 edges). 

We also attach a number to each node of a community tree
to describe the size of that node/community (size: number of vertices belonging to that node).
In persistent diagram, we attach a number to a point when there are multiple components with the same birth and death time.

\section{Discussion and Future Work}

In this paper, we introduce the concept of community trees
and persistent diagrams of networks. 
To study the stability of community trees,
we use the bottleneck distance of the corresponding
persistent diagrams. 
We prove that the bottleneck distance 
is upper bounded by a quantity called $TSN$
that can be evaluated without constructing community trees.
All these concepts are related to TDA, and, thus, this paper presents
a new way to apply TDA to network science.

Here, we comment on some possible future directions based on the current work.
\begin{itemize}
\item {\bf Practical algorithm for bounding the $TSN$.}
The $TSN$ is a powerful tool to 
control the change in a community tree without constructing the entire tree (and the corresponding persistent diagram). 
But as Theorem~\ref{thm::NP} has proved, computing the $TSN$ is an NP-complete problem. 
To use the $TSN$ to bound the change of networks,
we need a fast algorithm that provides a useful bound of the $TSN$. 
Finding such an algorithm will be left for future work.

\item {\bf Visualization tool using community tree.}
Community trees provide a nice and easy illustration
of the network.
Thus, it can be used as a visualization tool for a complex network. 
We plan to design visualization methods using community trees
and investigate the information loss during the visualization process. 


\item {\bf Effects from stochastic updates on community trees.}
In dynamic networks, networks change over time.
We may model the change of networks by a stochastic model where 
edges and nodes may be created or eliminated with certain probabilities. 
How the community trees will change under such stochastic model
will be an interesting research topic. 
Studies on this problem will allow us to understand the stability of
communities generated by the CPM when the network is dynamic.

\item {\bf Connections to overlapping communities.}
The community tree presents a new way to characterize overlapping communities.
The CPM was used to detect overlapping communities by 
the fact that
the same vertices may appear in different communities \cite{understanding_Palla_2005,porter2009communities}.
Using a community tree,
we can define the overlap between two communities
by the merging between their corresponding components.
We will study how different notions of overlaps 
are related to each other.


\end{itemize}



\appendix
\section{Proof of Theorem~\ref{thm::star}}

Before we prove Theorem~\ref{thm::star}, we first recall a property for two networks that differ only by one vertex:
\begin{lemma}
Let $G_1,G_2$ be two networks such that $G_2 =G_1\backslash\{v\}$, where $v\in V(G_1)$.
Then 
$$
d_B(T(G_1),T(G_2)) \leq 1. 
$$
\label{lem::one}
\end{lemma}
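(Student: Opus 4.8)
The plan is to view the community trees of $G_1$ and $G_2$ as two merge trees indexed by the order $k$, with $k$ decreasing from the clique number down to $1$ so that communities grow and merge. Concretely, I regard the collections $\{\mathbb{C}_k(G_1)\}_k$ and $\{\mathbb{C}_k(G_2)\}_k$, together with the maps sending each $k$-community into the unique $(k-1)$-community containing it (the nesting established in Section 2.2), as two persistence modules. The goal is to exhibit a $1$-interleaving between them; by the persistence stability theorem \cite{cohen2007stability}, an interleaving at level $1$ forces a bottleneck matching of the associated diagrams at distance at most $1$, which is exactly $d_B(T(G_1),T(G_2))\le 1$.

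The heart of the argument is a structural lemma that supplies the two interleaving maps. First, since $G_2\subseteq G_1$, every $k$-clique of $G_2$ is a $k$-clique of $G_1$ and the $(k-1)$-overlap adjacencies are preserved, so each $k$-community of $G_2$ lies in a unique $k$-community of $G_1$; composing with the nesting gives a map $\psi_k:\mathbb{C}_k(G_2)\to\mathbb{C}_{k-1}(G_1)$. Second, and this is the step that shifts the order, I claim that for any $k$-community $\mathcal{C}$ of $G_1$ the subgraph $\mathcal{C}\backslash\{v\}$ lies in a single $(k-1)$-community of $G_2$, defining $\phi_k:\mathbb{C}_k(G_1)\to\mathbb{C}_{k-1}(G_2)$. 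To prove the claim I write $\mathcal{C}=\cup_\ell C_\ell$ as a union of its $k$-cliques and observe that after deleting $v$ each $C_\ell$ becomes either a $k$-clique (when $v\notin C_\ell$) or a $(k-1)$-clique (when $v\in C_\ell$) in $G_2$, so every vertex and edge of $\mathcal{C}\backslash\{v\}$ is covered by $(k-1)$-cliques of $G_2$. It then remains to show these $(k-1)$-cliques lie in one $(k-1)$-community, which I verify by walking along a $k$-clique path $C_{\ell_1},\dots,C_{\ell_m}$ in $\mathcal{C}$ and checking, case by case according to whether $v$ belongs to consecutive cliques and to their shared $(k-1)$-clique, that the $(k-1)$-cliques extracted from consecutive cliques always lie in a common $(k-1)$-community of $G_2$.

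With $\phi$ and $\psi$ in hand I would verify the interleaving identities, namely that $\psi_{k-1}\circ\phi_k$ and $\phi_{k-1}\circ\psi_k$ agree with the internal inclusion maps that drop the order by two. Both reduce to the observation that $\mathcal{C}\backslash\{v\}$ (respectively a $G_2$-community, which already omits $v$) is contained in both of the relevant communities, so uniqueness of the enclosing lower-order community forces each composition to coincide with the nesting map; the small orders $k=1,2$ are handled directly, using that the $1$-community of either graph is the whole graph.

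I expect the main obstacle to be the case analysis in the shift-by-one claim: one must rule out that deleting $v$ disconnects a community into pieces landing in different $(k-1)$-communities, which requires careful bookkeeping of whether $v$ lies in each clique of a $k$-clique path and in the $(k-1)$-clique overlaps. A secondary point is the passage from the interleaving to an explicit bottleneck matching: components of one tree with no genuine partner in the other must be matched to the diagonal, and one must confirm that such components have persistence at most $1$ so that this matching stays within the claimed bound. This last bookkeeping is precisely what the interleaving-to-bottleneck implication packages, so invoking it keeps the argument short while isolating the genuinely new combinatorial content in the structural lemma.
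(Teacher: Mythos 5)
The first thing to say is that there is no paper proof to compare against: Lemma~\ref{lem::one} is stated in the appendix as a property the authors ``recall,'' and it is then used in the proof of Theorem~\ref{thm::star} with no argument and no citation. Your proposal therefore supplies exactly what the paper omits, and judged on its own merits it is a sound strategy whose combinatorial core is correct. The shift-by-one claim does hold, and your case analysis closes: if $v\notin C_{\ell_{a+1}}$ then $v$ cannot lie in the shared $(k-1)$-clique $D_a\subseteq C_{\ell_{a+1}}$, so $D_a\subseteq C_{\ell_a}\backslash\{v\}$ and, comparing cardinalities when $v\in C_{\ell_a}$, $C_{\ell_a}\backslash\{v\}=D_a$ is itself a $(k-1)$-clique inside the intact clique $C_{\ell_{a+1}}$; if $v$ lies in both cliques, then $(C_{\ell_a}\backslash\{v\})\cap(C_{\ell_{a+1}}\backslash\{v\})\supseteq D_a\backslash\{v\}$ has at least $k-2$ vertices, which is precisely adjacency at the $(k-1)$-level; and all $(k-1)$-subcliques of a surviving $k$-clique pairwise overlap in $k-2$ vertices, hence lie in one $(k-1)$-community. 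Note also that $\phi_k$ is always well defined because $C\backslash\{v\}$ contains a $(k-1)$-clique for any $k$-clique $C$, so the target community exists; and the uniqueness of the enclosing lower-order community then forces the interleaving identities as you describe.

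Two points need tightening before this is a complete proof. First, the citation: \cite{cohen2007stability} proves bottleneck stability for persistence diagrams of tame real-valued functions, which is not literally the setting here; what you invoke is the interleaving-to-bottleneck implication for merge trees (Morozov, Beketayev, and Weber) or, after linearizing your set-level maps to maps of vector spaces spanned by $\mathbb{C}_k$, the algebraic stability theorem for $1$-interleaved persistence modules. Since the index here is the integer order $k$, you could alternatively avoid the machinery entirely and run the elder rule by hand: your maps $\phi,\psi$ send a component born at order $b$ to one born at order at least $b-1$, and a direct bookkeeping argument produces the matching with $\|\cdot\|_\infty\leq 1$. Second, the paper's conventions at the root must be checked explicitly: each graph has a single $1$-community (the whole graph), so both trees are capped identically at $k=1$ and the surviving component of each is assigned death time $1$; since $G_2\subseteq G_1$ and deleting a vertex drops the clique number by at most one, the maximal birth orders differ by at most one, so the two essential points can be matched to each other within distance $1$ rather than to the diagonal. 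With those two repairs, your argument stands, and it is arguably stronger than what the paper offers, since the interleaving gives a community-level correspondence rather than only the numerical bound.
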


{\bf Proof of Theorem~\ref{thm::star}.}
Let $G$ and $G'$ be the two graphs being considered. 
We define $G_1$ to be the graph with vertex $V(G_1) = V(G)\cap V(G')$
and edges $E(G_1) = E(G)\cap E(G')$. 

By triangle inequality, 
\begin{equation}
d_B(T(G),T(G')) \leq d_B(T(G),T(G_1)) +d_B(T(G_1),T(G')). 
\label{eq::tri}
\end{equation}
We will derive bounds on both quantities. 

{\bf Part I: Bounding $d_B(T(G),T(G_1)) $.}
Let $V_{RSN}$ denotes a possible candidate $V_0$ in Definition \ref{def::sn} that leads to $RSN(G,G')$. 
Namely, 
$$
\forall e\in E(G)\backslash E(G'), \,\,\nu(e) \cap V_{RSN} \neq \emptyset
$$
and $\|V_{RSN}\| = RSN(G,G')$, where $\|V\|$ denotes the cardinality of a vertex set $V$. 
Note that it is easy to see that $RSN(G,G') = RSN(G,G_1)$.

Define $G_2 = G\backslash V_{RSN}$ be the induced subgraph of $G$ by removing every vertex 
in $V_{RSN}$. 
Because $RSN(G,G') = RSN(G,G_1)$, 
$G_2\subset G_1\subset G$. 
This further implies
\begin{equation}
d_B(T(G),T(G_1))\leq d_B(T(G),T(G_2)).
\label{eq::G12}
\end{equation}

Let $V_{RSN}= \{v_1,\cdots,v_{RSN(G,G')}\}$.
We define a sequence of subgraphs $G^*_0,\cdots,G^*_{RSN(G,G')}$ such that $G^*_{0} = G$
and
$$
G^*_{t} =  G^*_{t-1}\backslash \{v_t\}, 
$$
for $t=1,\cdots, RSN(G,G')$. 
Note that $G^*_{RSN(G,G')} = G_2$. 
Namely, the sequence $G^*_{0},\cdots,G^*_{RSN(G,G')}$
is a sequence of graphs from $G$ to $G_2$ that differ by only one vertex. 
Because $G^*_{t} $ and $G^*_{t+1}$ differ by only one vertex, by Lemma~\ref{lem::one} and triangle inequalities
$$
d_B(T(G),T(G_2)) \leq \sum_{t=1}^{RSN(G,G')}d_B(T(G^*_{t-1},G^*_t)) \leq\sum_{t=1}^{RSN(G,G')} 1 = RSN(G,G').
$$
Combining the above inequality and equation \eqref{eq::G12}, we conclude
\begin{equation}
d_B(T(G),T(G_1))\leq RSN(G,G').
\label{eq::G1bound}
\end{equation}

{\bf Part II: Bounding $d_B(T(G_1),T(G')) $.}
By Definition \ref{def::sn}, $RSN(G_a,G_b) = ASN(G_b,G_a)$ for any two graphs $G_a$ and $G_b$.
Thus, 
$ASN(G,G') = RSN(G',G)$.

Replacing $G$ by $G'$ in {\bf Part I}, we conclude
$$
d_B(T(G'),T(G_1))\leq RSN(G',G) = ASN(G,G').
$$
Using the above inequality and equations \eqref{eq::G1bound} and \eqref{eq::tri}, we obtain
\begin{align*}
d_B(T(G),T(G')) &\leq d_B(T(G),T(G_1)) +d_B(T(G_1),T(G'))\\ 
&\leq RSN(G,G')+ASN(G,G') \\
&= TSN(G,G').
\end{align*}
$\Box$

\newpage


\bibliography{supplementary}

\begin{thebibliography}{22}
\providecommand{\natexlab}[1]{#1}
\providecommand{\url}[1]{\texttt{#1}}
\expandafter\ifx\csname urlstyle\endcsname\relax
  \providecommand{\doi}[1]{doi: #1}\else
  \providecommand{\doi}{doi: \begingroup \urlstyle{rm}\Url}\fi

\bibitem[Adamcsek et~al.(2006)Adamcsek, Palla, Farkas, Der{\'e}nyi, and
  Vicsek]{adamcsek2006cfinder}
B.~Adamcsek, G.~Palla, I.~J. Farkas, I.~Der{\'e}nyi, and T.~Vicsek.
\newblock Cfinder: locating cliques and overlapping modules in biological
  networks.
\newblock \emph{Bioinformatics}, 22\penalty0 (8):\penalty0 1021--1023, 2006.

\bibitem[Balakrishnan et~al.(2013)Balakrishnan, Narayanan, Rinaldo, Singh, and
  Wasserman]{balakrishnan2013cluster}
S.~Balakrishnan, S.~Narayanan, A.~Rinaldo, A.~Singh, and L.~Wasserman.
\newblock Cluster trees on manifolds.
\newblock In \emph{Advances in Neural Information Processing Systems}, pages
  2679--2687, 2013.

\bibitem[Burnham et~al.(2011)Burnham, Anderson, and Huyvaert]{burnham2011aic}
K.~P. Burnham, D.~R. Anderson, and K.~P. Huyvaert.
\newblock Aic model selection and multimodel inference in behavioral ecology:
  some background, observations, and comparisons.
\newblock \emph{Behavioral Ecology and Sociobiology}, 65\penalty0 (1):\penalty0
  23--35, 2011.

\bibitem[Chaudhuri and Dasgupta(2010)]{chaudhuri2010rates}
K.~Chaudhuri and S.~Dasgupta.
\newblock Rates of convergence for the cluster tree.
\newblock In \emph{Advances in Neural Information Processing Systems}, pages
  343--351, 2010.

\bibitem[Chaudhuri et~al.(2014)Chaudhuri, Dasgupta, Kpotufe, and von
  Luxburg]{chaudhuri2014consistent}
K.~Chaudhuri, S.~Dasgupta, S.~Kpotufe, and U.~von Luxburg.
\newblock Consistent procedures for cluster tree estimation and pruning.
\newblock \emph{IEEE Transactions on Information Theory}, 60\penalty0
  (12):\penalty0 7900--7912, 2014.

\bibitem[Chen(2016)]{chen2016generalized}
Y.-C. Chen.
\newblock Generalized cluster trees and singular measures.
\newblock \emph{arXiv preprint arXiv:1611.02762}, 2016.

\bibitem[Cohen-Steiner et~al.(2007)Cohen-Steiner, Edelsbrunner, and
  Harer]{cohen2007stability}
D.~Cohen-Steiner, H.~Edelsbrunner, and J.~Harer.
\newblock Stability of persistence diagrams.
\newblock \emph{Discrete \& Computational Geometry}, 37\penalty0 (1):\penalty0
  103--120, 2007.

\bibitem[DINUR and SAFRA(2005)]{dinur2005hardness}
I.~DINUR and S.~SAFRA.
\newblock On the hardness of approximating minimum vertex cover.
\newblock \emph{Annals of mathematics}, 162\penalty0 (1):\penalty0 439--485,
  2005.

\bibitem[Duan et~al.(2012)Duan, Li, Li, and Lu]{duan2012incremental}
D.~Duan, Y.~Li, R.~Li, and Z.~Lu.
\newblock Incremental k-clique clustering in dynamic social networks.
\newblock \emph{Artificial Intelligence Review}, pages 1--19, 2012.

\bibitem[Eldridge et~al.(2015)Eldridge, Belkin, and Wang]{eldridge2015beyond}
J.~Eldridge, M.~Belkin, and Y.~Wang.
\newblock Beyond hartigan consistency: Merge distortion metric for hierarchical
  clustering.
\newblock In \emph{Conference on Learning Theory}, pages 588--606, 2015.

\bibitem[Fortunato(2010)]{fortunato2010community}
S.~Fortunato.
\newblock Community detection in graphs.
\newblock \emph{Physics reports}, 486\penalty0 (3):\penalty0 75--174, 2010.

\bibitem[Jisu et~al.(2016)Jisu, Chen, Balakrishnan, Rinaldo, and
  Wasserman]{jisu2016statistical}
K.~Jisu, Y.-C. Chen, S.~Balakrishnan, A.~Rinaldo, and L.~Wasserman.
\newblock Statistical inference for cluster trees.
\newblock In \emph{Advances In Neural Information Processing Systems}, pages
  1839--1847, 2016.

\bibitem[Palla et~al.(2005)Palla, Der\'{e}nyi, Farkas1, and
  Vicsek]{understanding_Palla_2005}
G.~Palla, I.~Der\'{e}nyi, I.~Farkas1, and T.~Vicsek.
\newblock Uncovering the overlapping community structure of complex networks in
  nature and society.
\newblock \emph{Nature Letters}, 435:\penalty0 814--818, 2005.

\bibitem[Palla et~al.(2007)Palla, Barab{\'a}si, and
  Vicsek]{palla2007quantifying}
G.~Palla, A.~Barab{\'a}si, and T.~Vicsek.
\newblock Quantifying social group evolution.
\newblock \emph{Nature}, 446\penalty0 (7136):\penalty0 664--667, 2007.

\bibitem[Pollner et~al.(2005)Pollner, Palla, and
  Vicsek]{pollner2005preferential}
P.~Pollner, G.~Palla, and T.~Vicsek.
\newblock Preferential attachment of communities: The same principle, but a
  higher level.
\newblock \emph{EPL (Europhysics Letters)}, 73\penalty0 (3):\penalty0 478,
  2005.

\bibitem[Porter et~al.(2009)Porter, Onnela, and Mucha]{porter2009communities}
M.~A. Porter, J.-P. Onnela, and P.~J. Mucha.
\newblock Communities in networks.
\newblock \emph{Notices of the AMS}, 56\penalty0 (9):\penalty0 1082--1097,
  2009.

\bibitem[Stuetzle(2003)]{stuetzle2003estimating}
W.~Stuetzle.
\newblock Estimating the cluster tree of a density by analyzing the minimal
  spanning tree of a sample.
\newblock \emph{Journal of classification}, 20\penalty0 (1):\penalty0 025--047,
  2003.

\bibitem[Sundaresan et~al.(2007)Sundaresan, Fischhoff, Dushoff, and
  Rubenstein]{sundaresan2007network}
S.~R. Sundaresan, I.~R. Fischhoff, J.~Dushoff, and D.~I. Rubenstein.
\newblock Network metrics reveal differences in social organization between two
  fission--fusion species, grevy’s zebra and onager.
\newblock \emph{Oecologia}, 151\penalty0 (1):\penalty0 140--149, 2007.

\bibitem[Wasserman(2016)]{wasserman2016topological}
L.~Wasserman.
\newblock Topological data analysis.
\newblock \emph{arXiv preprint arXiv:1609.08227}, 2016.

\bibitem[Yannakakis and Gavril(1980)]{yannakakis1980edge}
M.~Yannakakis and F.~Gavril.
\newblock Edge dominating sets in graphs.
\newblock \emph{SIAM Journal on Applied Mathematics}, 38\penalty0 (3):\penalty0
  364--372, 1980.

\bibitem[Zachary(1977)]{zachary1977information}
W.~W. Zachary.
\newblock An information flow model for conflict and fission in small groups.
\newblock \emph{Journal of anthropological research}, 33\penalty0 (4):\penalty0
  452--473, 1977.

\bibitem[Zhang et~al.(2006)Zhang, Ning, and Zhang]{zhang2006identification}
S.~Zhang, X.~Ning, and X.-S. Zhang.
\newblock Identification of functional modules in a ppi network by clique
  percolation clustering.
\newblock \emph{Computational biology and chemistry}, 30\penalty0 (6):\penalty0
  445--451, 2006.

\end{thebibliography}
\bibliographystyle{abbrvnat}

\end{document}